\newtheorem{example}{Example}
\newtheorem{theorem}{Theorem}
\newtheorem{assumption}{Assumption}
\newtheorem{remark}{Remark}
\newtheorem{lemma}{Lemma}
\def\<{\left\langle} 
\def\>{\right\rangle}
\providecommand{\argmin}{\mathop{\rm argmin}}
\newcommand{\g}{\mathbf{g}}
\newcommand{\R}{\mathbb{R}}
\newcommand{\E}{\mathbb{E}}
\newcommand{\veps}{\bm{\epsilon}}
\newcommand{\vthe}{\bm{\theta}}
\title{Hard-Thresholding Meets Evolution Strategies in Reinforcement Learning}
\author[1]{Chengqian Gao\thanks{Equal contribution}}
\newcommand\CoAuthorMark{\footnotemark[\arabic{footnote}]} 
\newcommand\CoRespondingMark{\footnotemark[\arabic{footnote}]} 
\author[1]{William de Vazelhes\protect\CoAuthorMark}
\author[1]{Hualin Zhang}
\author[1, 2]{Bin Gu\thanks{Corresponding to: jsgubin@gmail.com, zhiqiang.xu@mbzuai.ac.ae}}
\author[1]{Zhiqiang Xu\protect\CoRespondingMark}
\affil[1]{Mohamed bin Zayed University of Artificial Intelligence, UAE}
\affil[2]{School of Artificial Intelligence, Jilin University, China}
\begin{document}

\maketitle

\begin{abstract}
    Evolution Strategies (ES) have emerged as a competitive alternative for model-free reinforcement learning, showcasing exemplary performance in tasks like Mujoco and Atari. Notably, they shine in scenarios with imperfect reward functions, making them invaluable for real-world applications where dense reward signals may be elusive. Yet, an inherent assumption in ES—that all input features are task-relevant—poses challenges, especially when confronted with irrelevant features common in real-world problems. This work scrutinizes this limitation, particularly focusing on the Natural Evolution Strategies (NES) variant. We propose NESHT, a novel approach that integrates Hard-Thresholding (HT) with NES to champion sparsity, ensuring only pertinent features are employed. Backed by rigorous analysis and empirical tests, NESHT demonstrates its promise in mitigating the pitfalls of irrelevant features and shines in complex decision-making problems like noisy Mujoco and Atari tasks\footnote{Code available at \protect\url{https://github.com/cangcn/NES-HT}}.
\end{abstract}

\section{Introduction}

Evolution Strategies (ES) offer a compelling alternative for model-free reinforcement learning. Many studies proved the effectiveness of the ES algorithm in addressing complex decision-making problems, such as Mujoco and Atari tasks~\cite{DBLP:journals/corr/SalimansHCS17,DBLP:journals/corr/abs-1712-06567,DBLP:conf/nips/ManiaGR18}, 
and, particularly, its remarkable proficiency in dealing with problems where imperfect reward functions are demanded such as sparse reward signals or delayed feedback~\cite{DBLP:journals/corr/SalimansHCS17,DBLP:journals/corr/abs-2110-01411,DBLP:journals/corr/Francois-LavetF15,DBLP:journals/fcsc/QianY21}. This capability is particularly appealing in real-world applications where acquiring dense reward signals may be expensive or unachievable.

However, the ES works under a potentially oversimplifying assumption, i.e., every input feature is inherently relevant to the task at hand, which could lead to poor performance in its application to real-world decision-making systems. For example, for an autonomous driving system, which receives pixel inputs from onboard cameras, while its main objective is to ensure safe navigation, the data stream might unintentionally include features irrelevant to driving decisions, such as the vehicle color. 
The inclusion of such irrelevant features in the learning process can not only result in unnecessarily large model sizes, but also lead to sub-optimal decisions. 
Although the detrimental effects of task-irrelevant features have been noticed across deep learning~\cite{DBLP:journals/jmlr/HubaraCSEB17,DBLP:conf/mlsys/BlalockOFG20,DBLP:journals/csur/ChenNRWS23}, reinforcement learning~\cite{DBLP:conf/ijcai/SokarMMPS22,DBLP:conf/atal/GrootenSDMTPM23}, and unsupervised learning~\cite{DBLP:journals/tip/LiT15} research, it remains unclear how the noise features affect the performance of the ES algorithm. 

This work aims to regularize the ES algorithm with sparsity, expecting that the obtained sparse policies can automatically select and utilize a small but necessary portion of available features. Specifically, we focus on the Natural Evolution Strategies (NES) \cite{DBLP:journals/jmlr/WierstraSGSPS14}, a prevalent variant of the ES algorithm. NES estimates gradients, by only evaluating the objective function, for the objective optimization where irrelevant features  can potentially drive the training process to end up with poor policies. 
To mitigate the impact of task-irrelevant features, we introduce the Hard-Thresholding (HT) operator~\cite{blumensath2009iterative} to the NES framework, given its popularity and simplicity for performing $L_0$ constrained optimization at a desired sparsity level. 
However, the HT was originally developed for optimization problems where the gradient comes with a closed-form expression~\cite{DBLP:conf/icml/GargK09,DBLP:journals/tit/NguyenNW17}. The compatibility and effectiveness of the HT operator with the NES estimate of the gradient remains unclear. To the best of our knowledge, this is the first time to propose sparsity-induced natural evolution strategies.

We begin by examining the negative effects of task-irrelevant observations on the NES. We find that the inclusion of such irrelevant features increases the randomness of the reward function, resulting in a higher variance of the estimated gradient. Consequently, it hinders convergence to the optimal policy.
To address the issue of irrelevant features, we present the NESHT, which seamlessly integrates the HT operator into the NES algorithm. The \textit{modus operandi} of the NESHT is straightforward: the parameters are truncated to retain only a specified proportion, upon each gradient descent/ascent update. 
In addition, we provide a comprehensive analysis of the convergence and complexity of the NESHT, underpinning it with the canonical assumptions of sparse learning.
This analytical deep dive effectively resolves the lingering uncertainty regarding the compatibility of natural gradients with the hard-thresholding operator.
Further, an extensive empirical study verifies the effectiveness of the NESHT, particularly in challenging noisy Mujoco environments with sparse rewards and Atari environments with pixel inputs.

\section{Preliminaries}
\paragraph{Markov decision process} 
We are concerned with the reinforcement learning problem, where our objective is to optimize a policy, denoted by $\pi_{\boldsymbol{\theta}}$, parameterized by $\boldsymbol{\theta}$. 
This policy is defined over a Markov decision process represented by $M=\langle \mathcal{S}, \mathcal{A}, \mathcal{T}, d_0, \mathcal{R}, \gamma \rangle$. For each episode, the initial state $\boldsymbol{s}_0$ is sampled from the distribution $d_0$. At each time step $t$, given an observation $\boldsymbol{s_t} \in \mathcal{S}$, the policy determines an action $\boldsymbol{a_t} \in \mathcal{A}$, which then results in an immediate reward $r(\boldsymbol{s_t}, \boldsymbol{a_t}) \in \mathcal{R}$. Subsequently, the system transitions to a new observation $\boldsymbol{s_{t+1}}$ in accordance with the dynamics $\mathcal{T}$. The resulting trajectory can be presented as $\bm{\tau}=\{(\boldsymbol{s}_t, \boldsymbol{a}_t, r, \boldsymbol{s}_{t+1})\}$. In order to balance the trade-off between immediate rewards and long-term rewards, the discount factor $\gamma$ is introduced.

\section{Decision-Making with Irrelevant Features}
We propose NESHT, equipping the Natural Evolution Strategies with the Hard-Thresholding operator, for handling task-irrelevant observations in decision-making problem. 

\subsection{The objective function}\label{sec:objfun}
Our objective is to maximize the \textit{fitness score} achieved by the policy while minimizing the \textit{impact} induced from the task-irrelevant features. We hypothesize that employing a sparse policy can effectively manage these redundant observations.

\paragraph{Fitness score} The performance of the policy can be quantified by the \textit{fitness function}, which is defined as the expected sum of rewards over its rollout trajectories: 
\begin{equation}
    F(\boldsymbol{\theta}) := \mathbb{E}_{\tau \sim d_0, \pi_{\boldsymbol{\theta}}, \mathcal{T}} f_{\tau}(\vthe), ~\text{with}~ f_{\tau}(\vthe) := \sum_{t=0}^{|\tau|} r(\boldsymbol{s_t}, \boldsymbol{a_t})
\label{eq:fitness_function}
\end{equation}
It's important to note that, in this context, the discount factor $\gamma$ is set to 1. This is in contrast to traditional RL settings where it often assumes values such as 0.99 or 0.9. Another characteristic is that the fitness function can be discontinuous \textit{w.r.t.} the policy parameters due to the randomness in environments and the complex reward function. 

\paragraph{$L_0$-constraint optimization} We propose mitigating the impact of task-irrelevant features through a sparse policy, under the premise that sparsity can effectively filter out irrelevant information present in inputs. 
Formally, our objective is to improve a policy while also constraining its complexity, i.e., the $L_0$ constrained optimization, with $\| \cdot \|_0$ denotes the $L_0$ (pseudo-)norm (number of non-zero components of a vector):
\begin{equation}
    \max_{ \boldsymbol{\theta} } F( \boldsymbol{\theta} ) \quad \textit{s.t.} \quad  \| \boldsymbol{\theta} \|_0 < k
\label{eq:objective_function}
\end{equation}

\paragraph{Why $L_0$ constraint?} 
In our context, where only a small subset of observations is task-relevant, irrelevant features can significantly degrade performance. $L_0$-constrained optimization directly enforces a constraint on the $L_0$ norm of the learned parameter vector, ensuring the sparsity of the resulting model, alluring for feature selection tasks. Unlike $L_1$-constrained optimization, which promotes sparsity but does not guarantee exact zero values, $L_0$-constrained optimization offers precise control over sparsity by allowing certain model parameters to be set exactly to zero. This capability not only enhances model interpretability but also makes it well-suited for our setting, i.e., decision-making with irrelevant observations.

\subsection{Our proposal: NESHT} 
We introduce NESHT, a solution for decision-making problems involving both task-relevant and irrelevant features. While NES and the Hard-Thresholding operator are not novel concepts individually, their compatibility when used together may raise questions. To be self-contained, we now provide brief descriptions of each.

\paragraph{NES} 
We employ the competitive NES algorithm, to optimize the policy, with the following gradient estimator:
\begin{equation}
    \nabla_{\boldsymbol{\theta}} \mathbb{E}_{\boldsymbol{\epsilon} \sim \mathcal{N}(\boldsymbol{0}, I)} F(\boldsymbol{\theta} + \sigma \boldsymbol{\epsilon}) = \frac{1}{\sigma} \mathbb{E}_{\boldsymbol{\epsilon} \sim \mathcal{N}(0, I)}  F(\boldsymbol{\theta} + \sigma \boldsymbol{\epsilon})\boldsymbol{\epsilon} 
    \label{eq:nes_gradient_estimator_with_gaussian}
\end{equation}
In NES, the gradient is approximated through sampling and serves as an approximation, bypassing challenges with non-differentiable functions or exploding gradients. For the derivation about Equation~(\ref{eq:nes_gradient_estimator_with_gaussian}), please refer to Appendix.

\paragraph{Hard-thresholding operator}
To achieve the $L_0$-constrained optimization described as Equation~(\ref{eq:objective_function}), we introduce the hard-thresholding operator into NES. It truncates the parameter vector, retaining only $k$ components with the most significant absolute magnitudes, represented as $\operatorname{trunc}(\boldsymbol{\theta}, k)$, or, more succinctly, as $\operatorname{trunc}(\boldsymbol{\theta})$. While incorporating HT into NES is straightforward, the compatibility between HT and NES remains an open question. 

\paragraph{Compatibility concerns} 
To establish the convergence of NESHT, it is essential to demonstrate the convergence of the hard-thresholding algorithm for non-convex and discontinuous $F$, with a gradient estimated as in \eqref{eq:nes_gradient_estimator_with_gaussian} via the NES algorithm.
In the literature, \cite{xu2019non} proved the convergence of stochastic algorithms in the case of non-convex objective functions $F$, for a non-convex proximal term which can be taken as the indicator function of the set of all $k$-sparse vectors (i.e. the $L_0$ pseudo-ball). This proof of convergence applies to stochastic hard-thresholding algorithms. However, their analysis assumes Lipschitz-smoothness of $F$ and considers a general stochastic estimator of the gradient. Therefore, it does not account for the specific errors introduced by the gradient estimator from \eqref{eq:nes_gradient_estimator_with_gaussian}. More recently, the work of \cite{metel2023sparse}, analyzes the convergence of zeroth-order methods (similar to evolutionary strategies) for a Lipschitz-continuous and non-convex function $F$.  However, in our case, $F$ is discontinuous in general. Thus, to the best of our knowledge, the convergence of evolutionary strategies in such setting remains an open question. In the next section, we address this question by demonstrating that, under mild assumptions, proper convergence of Algorithm \ref{alg:nes+ht} is guaranteed.

\begin{algorithm}[tb]
\caption{NES with Hard-Thresholding}
\label{alg:nes+ht}
    \begin{algorithmic}
    \STATE {\bfseries Input:} \\
    $\alpha$ - Learning rate, \\
    $\boldsymbol{\theta_0}$- Initial policy parameters in $\mathbb{R}^d$, \\
    $n$ \ - Population size, \\
    $N$ \ - Number of rollouts collected for each agent, \\
    $\sigma$ \ - Noise standard deviation, \\
    $k$ \  - Number of parameters to be kept.
    \FOR{$t =0, 1, 2, ... T - 1$}
        \FOR{$i=1$, ..., $n$}
        \STATE Sample a Gaussian perturbation $\boldsymbol{\epsilon}_i \sim \mathcal{N}(0, I)$ .
            \FOR{$j=1$, ..., $N$}  
            \STATE Sample a rollout $\tau^{\veps_i}_j$
            \STATE Compute returns $f_{\tau^{\veps_i}_j}(\boldsymbol{\theta}_t + \sigma \boldsymbol{\epsilon_i})$ 
            \ENDFOR
        \ENDFOR
        \STATE Set $\boldsymbol{\theta}_{t+\frac{1}{2}} \leftarrow \boldsymbol{\theta}_t + \frac{\alpha}{n N \sigma}\sum_{i=1}^n \sum_{j=1}^N f_{\tau^{\veps_i}_j}(\boldsymbol{\theta}_t + \sigma \boldsymbol{\epsilon_i}) \veps_i$
        \STATE Truncate the parameters: $\boldsymbol{\theta}_{t+1} \leftarrow \operatorname{trunc}(\boldsymbol{\theta}_{t+\frac{1}{2}}, k)$
    \ENDFOR
\end{algorithmic}
\end{algorithm}

\section{Convergence Analysis} \label{sec:analysis}
The integration of NES with HT is detailed in Algorithm~\ref{alg:nes+ht}, where the hard-thresholding operator is applied to the learned parameters after each update. In this section, we provide a proof of convergence for NES combined with Hard-Thresholding, i.e., our NESHT, addressing the compatibility concern.
Additionally, we would like to highlight that our analysis can also cover the case where no hard-thresholding operator is used (it only suffices to take the proximal term $r$ in our proof of Theorem~\ref{thm:thm} in Appendix to be the constant zero): to our knowledge, such a proof of convergence for NES for general discontinuous functions $F$ (which correspond to a realistic reinforcement learning setting) is the first in the literature, and we hope that such a result, as well as the subsequent remarks and discussions on the influence of each parameter on the convergence rate (bound on the expected reward $B$, dimension $d$, etc.) can be of interest to the NES community.

\subsection{Assumptions}
To proceed with the proof of convergence of NESHT, we will need the following assumptions below.
\begin{assumption}[Boundedness of $F$]\label{ass:bound}
    The fitness function $F$ is bounded on its domain, that is, there exists a universal constant $B > 0$ such that:
    $$\forall \vthe \in \mathbb{R}^d: |F(\vthe)| \leq B$$
\end{assumption}
\begin{remark}
$F(\vthe)$ represents the expected rewards obtained by executing policy $\pi_{\vthe}$. 
The boundedness assumption is typically reasonable since immediate rewards do not tend to infinity, and evaluation trajectories always have finite lengths. Importantly, this assumption remains valid even when dealing with task-irrelevant features.
\end{remark}

Additionally, we will need the following assumption on the variance of the cumulative reward, for a given parameter vector $\vthe$.
\begin{assumption}[Bounded variance of $f_{\tau}$]\label{ass:pol}
We posit the existence of a universal constant $C > 0$ such that the variance of the cumulative reward for any $\vthe \in \{\vthe_{0}, \vthe_{\frac{1}{2}}, ..., \vthe_{T-\frac{1}{2}},  \vthe_{T}\}$ is bounded by $C$, i.e.:
$$
\mathbb{E}_{\tau} \left[ | f_{\tau}(\vthe) - F(\vthe) |^2 \right] \leq C.
$$
\end{assumption}

\begin{remark}
    Assumption~\ref{ass:pol} reflects the inherent randomness from both the policy, whether it is deterministic or stochastic, and the environment, which introduces randomness through factors such as the dynamics $\mathcal{T}$, the reward function $r(s,a)$, and the initial distribution of states $d_0$. 
    Also, please note that if the reward and the episode length are limited, as is usually the case in RL, then Assumptions~\ref{ass:bound} and \ref{ass:pol} are satisfied. 
    An observant reader may notice that the inclusion of task-irrelevant features unavoidably leads to an increase in the constant $C$ due to the introduction of randomness. As we will see later, this increase hampers the convergence of NES algorithms.
\end{remark}

\subsection{Smoothness}
Since $F$ can be discontinuous in general, maximizing $F$ directly is impossible with evolutionary strategies. For instance if $F$ is Dirac-like, such as $F(\bm{\theta}) = \begin{cases}
    1 ~\text{if}~ \vthe = \bm{0}\\
    0 ~\text{otherwise}
\end{cases}$, the probability (for a given $\bm{\theta}$), to successfully sample an $\bm{\epsilon}$ such that $F(\bm{\theta} + \sigma \bm{\epsilon}) = 1$ is actually zero, which means the parameters will be updated with probability zero. 
However, we can instead analyze the convergence of a smoothed version of $F$, $F_{\sigma}$, defined below:  
$$F_{\sigma}(\vthe) := \mathbb{E}_{\boldsymbol{\epsilon} \sim \mathcal{N}(0, I)}
F(\boldsymbol{\theta} + \sigma \boldsymbol{\epsilon})$$
Note that $F_{\sigma}$ converges towards $F$ for small $\sigma$ in terms of \textit{eh-convergence}, as described in Theorem 3.2 from \cite{yu1992minimization}. 
The first step, to derive the convergence rate of our algorithm with $F_{\sigma}$, is to prove that $F_{\sigma}$ is smooth, and to derive its smoothness constant, which we then use in a proof framework similar to \cite{xu2019non}.

\begin{lemma}\label{lem:smoothness}
Under Assumption~\ref{ass:bound}, $F_{\sigma}$ is Lipschitz-smooth (i.e. its gradient is Lipschitz-continuous), with a smoothness constant $L = \frac{(d + 1)B}{\sigma^2}$, that is, such $L$ verifies: 
$$\forall \vthe_1, \vthe_2 \in (\R^d)^2: \|\nabla F_{\sigma}(\vthe_1) - \nabla F_{\sigma}(\vthe_2)\| \leq L \| \vthe_1 - \vthe_2\|$$
\end{lemma}

\begin{proof}
    Proof in Appendix.
\end{proof}
For discontinuous functions $F$, the fact that $F_{\sigma}$ is smooth was already known before in the literature (see e.g. \cite{ermoliev1995nonsmooth}). However, such works did not provide an explicit formula for the smoothness constant $L$. Here, for the first time in the literature (to the best of our knowledge), using the boundedness assumption on $F$, we could derive an explicit formula for the smoothness constant $L$.

One can therefore see that $L$ is proportional to both the bound of the fitness function, $B$, and the dimension of the policy parameters, $d$, while being inversely proportional to the variance $\sigma^2$. In Section 4.4, we will observe the role of such smoothness constant $L$: the smaller it is, the faster the NES algorithm will converge.

\subsection{Error of the gradient estimator} \label{sec:errgrad}
We now consider the gradient estimator with a general population of $n$ random perturbations, and a number of rollouts of $N$ for each perturbation. More precisely, assume that we sample $n$ random directions $\{\veps_i\}_{i=1}^n:= \{\veps_1, ..., \veps_n\}$ independently and identically distributed, and that for each of these random directions $\veps_i$, we sample we sample $N$ rollouts  $\{\tau_j^{\veps_i}\}_{j=1}^N:=\{\tau^{\veps_i}_1, .., \tau^{\veps_i}_N \}$ independently and identically distributed, to obtain a final collection of rollouts $\{\{\tau^{\veps_i}_j\}_{j=1}^N\}_{i=1}^n$ , and to get $N \times n$ gradient estimators $\hat{g}_{\sigma, \veps_i, {\tau}^{\veps_i}_j}, (i, j) \in [n] \times [N]$ defined below:
$$ \hat{g}_{\sigma, \veps_i, \tau_j^{\veps_i}} (\vthe) := \frac{1}{\sigma}f_{\tau_j^{\veps_i}}(\vthe + \sigma \veps_i) \veps_i$$
which we aggregate in the following estimator:
$$ \bar{g}_{\sigma, \{\veps_i\}_{i=1}^n, \{\{\tau^{\veps_i}_j\}_{j=1}^N\}_{i=1}^n}(\vthe)  := \frac{1}{nN} \sum_{i=1}^n  \sum_{j=1}^N \hat{g}_{\sigma, \veps_i, \tau^{\veps_i}_j} (\vthe) $$

\begin{lemma}\label{lem:boundvar}
Under Assumptions \ref{ass:bound} and \ref{ass:pol}, the estimator above is an unbiased estimate of the gradient of the smoothed function $F$, and its variance is bounded, more precisely, for any $\vthe \in \{\vthe_{0}, \vthe_{\frac{1}{2}}, ..., \vthe_{T-\frac{1}{2}},  \vthe_{T}\}$:   
$$\E \bar{g}_{\sigma, \{\veps_i\}_{i=1}^n, \{\{\tau^{\veps_i}_j\}_{j=1}^N\}_{i=1}^n} (\vthe) = \nabla_{\vthe} F_{\sigma}(\vthe)$$
$$ \E \|\bar{g}_{\sigma, \{\veps_i\}_{i=1}^n, \{\{\tau^{\veps_i}_j\}_{j=1}^N\}_{i=1}^n} (\vthe) - \nabla_{\vthe} F_{\sigma}(\vthe) \|^2 \leq  \frac{C d}{N\sigma^2} +  \frac{d B^2}{n\sigma^2} $$
\end{lemma}

\begin{proof}
    See Appendix. We begin by examining the unbiasedness (using a standard proof) and variance (using a novel proof up to our knowledge) of the gradient estimator for a single perturbation, i.e., $\hat{g}_{\sigma, \veps_i, \tau_j^{\veps_i}}$. We then generalize our results to account for multiple perturbations (\(n\)) and rollouts (\(N\)).
\end{proof}

\paragraph{Advantages of NESHT: reduction in constant $C$} 
We present here a formal explanation for the superiority of NESHT over NES in the lens of constant $C$. Thanks to hard-thresholding, along training, $\vthe_t$ and $\vthe_{t+{\frac{1}{2}}}$ remain in the space of $k$-sparse vectors (up to small perturbations $\sigma \veps$), whereas they could live anywhere in $\R^d$ in the case of NES. 
Based on the hypothesis that the hard-thresholding operation effectively selects relevant features (which we have verified experimentally in Section \ref{sec:hypothsis}), NESHT can successfully mitigate the impact of irrelevant features and reduces the value of $C$. To illustrate this, one can consider the following scenario.

\begin{example}\label{ex:1}
Consider a one-step decision-making experiment, with linear policy, and fitness score given as: 
$f_{\tau}(\vthe) := \bm{\bm{x}}^{\top} (\vthe - \vthe^*)$, where $\vthe^*$ is a $k$-sparse vector, with $S \subseteq [d]$ being the set of coordinates of its non-zero components, i.e., the relevant features. In addition, $\bm{\bm{x}}$ is the input state, which we assume follows a normal distribution $\mathcal{N}(\bm{0}, \sigma\bm{I}_{d \times d})$ for $\sigma >0$ ($\bm{I}_{d \times d}$ denoting the identity). 
We then have, for any bounded policy $\vthe \in [-1, 1]^d$: 
\begin{align*}
&\E_{\bm{\bm{x}}} | f_{\tau}(\vthe) - F(\vthe) |^2  = \E_{\bm{\bm{x}}} (\vthe - \vthe^*)^{\top} \bm{\bm{x}} \bm{\bm{x}}^{\top} (\vthe - \vthe^*) \\
&=  (\vthe - \vthe^*)^{\top}  \sigma^2 \bm{I}_{d \times d} (\vthe - \vthe^* ) = \sigma^2 \|\vthe - \vthe^* \|^2
\end{align*}
\end{example}

Therefore, if there are many irrelevant components present (i.e. $|[d] \setminus S|$ is large), the episode-wise variance of $f_{\tau}$ (and its bound $C$) will be higher when $\vthe$ is dense (proportionally to $\sigma^2$). As established in Lemma~\ref{lem:boundvar}, the proper convergence of NESHT depends on this variance. The application of a hard-thresholding operator explicitly filters out some of the noisy features, introducing a bias that steers the policy towards making decisions exclusively based on sparse observations. This reduces the variance and ensures better convergence to the optimal policy. 

In practical terms, given a fixed interaction budget for $n$ and $N$, the variance of the gradient estimator may be too high for vanilla NES, causing it to fail to converge to the optimal policy. However, with the reduced variance of the gradient estimator in NESHT, as described above, convergence of the parameters $\vthe$ to a stationary point of the fitness function $F$ can be successfully ensured, as stated in Theorem~\ref{thm:thm}. Section \ref{sec:xp} provides illustrations of cases where NES fails to converge to a successful policy, but NESHT can learn a successful policy in several RL tasks. This validates our hypothesis that learning a sparse policy with NESHT can properly handle irrelevant noise in the observations.

\subsection{Convergence rate}
Equipped with Lemmas \ref{lem:smoothness} and \ref{lem:boundvar}, we can now prove the convergence of Algorithm~\ref{alg:nes+ht}, following for the most part the framework of \cite{xu2019non} for stochastic gradient descent with a non-convex function and a non-convex non-smooth proximal term, but plugging into it our novel bounds for (i) the smoothness constant of $F_{\sigma}$ and (ii) the variance of the gradient estimator $\bar{g}_{\sigma, \{\veps_i\}_{i=1}^n, \{\{\tau^{\veps_i}_j\}_{j=1}^N\}_{i=1}^n}(\vthe)$ , under our specific assumption of boundedness of $F$. Because of such non-convex and non-smooth optimization problem, convergence is proven in terms of the expected distance of the Fréchet sub-differential $\hat{\partial} (- F_{\sigma}(\bm{\theta}) + \mathds{1}_{L_0(k)}(\vthe_T) )$ to zero \cite{rockafellar1976monotone}, where $\mathds{1}_{L_0(k)}$ denotes the indicator function of the $L_0$ constraint, i.e. $\mathds{1}_{L_0(k)}(\vthe) = \begin{cases}
    0 ~ \text{if} ~ \vthe ~ \text{is $k$-sparse}\\
    + \infty ~ \text{otherwise}
\end{cases}$. Note that this is the standard way to define stationary points for non-smooth regularizers (such as sparsity constraints) (see e.g. Thm. 2 in \cite{xu2019non} or Thm. 3 in \cite{deleu2021structured}).


\begin{figure*}
    \centering
    \includegraphics[width=\linewidth]{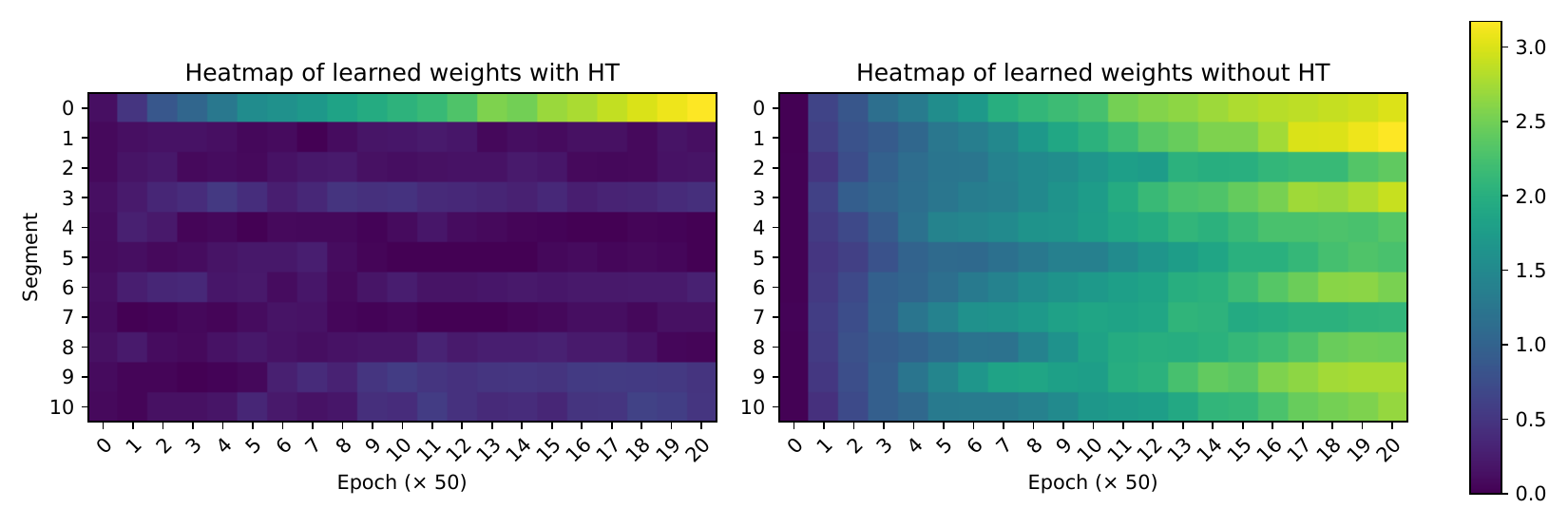}
    \caption{Heatmap illustrating the evolution of learned weights from a NESHT policy (left) and a NES policy (right) over epochs. The environment studied is Hopper-V3, with tenfold Gaussian noise. Among the 11 distinct observation segments (Y-axis), only the first (0-th) segment corresponds to the environment-provided features, while all subsequent 10 segments represent Gaussian noise (task-irrelevant features). The heatmap color indicates the norm of the learned weights. With the HT operator, only the portion of the neuron corresponding to task-relevant features (the 0-th segment) is activated. Without HT, NES struggles with task-irrelevant features, leading to poor performance.}
    \label{fig:weights_heatmap}
\end{figure*}

\begin{theorem}\label{thm:thm}
    Under Assumption~\ref{ass:bound} and \ref{ass:pol}, run Algorithm~\ref{alg:nes+ht}, with $\alpha=\frac{c}{L}\left(0<c<\frac{1}{2}\right)$, a number of iterations $T=2 c_2 B /\left(\alpha \varepsilon^2\right)$ and $N \geq \frac{4 c_1 d C}{\sigma^2 \varepsilon^2}$ and $n \geq \frac{4 c_1 d B^2}{\sigma^2 \varepsilon^2}$ for $t=0, \ldots, T-1$, then the output $\vthe_T$ of Algorithm~\ref{alg:nes+ht} satisfies
$$
\E\left[\operatorname{dist}\left(\bm{0}, \hat{\partial} \left( - F_{\sigma}\left(\vthe_T\right) + \mathds{1}_{L_0(k)}(\vthe_T)\right) \right)\right]  \leq \varepsilon,
$$
 where $c_1 = \frac{2c(1-2c)+2}{c(1-2c)} $, and $c_2=\frac{12-8c}{1-2c} $, and where $\operatorname{dist}(\bm{z}, S)$ is the distance of a set $S$ to a point $\bm{z}$, defined as the minimal Euclidean distance of any point in $S$ to $\bm{z}$. In particular in order to have $\mathbb{E}\left[\operatorname{dist}\left(0, \hat{\partial}  (- F_{\sigma}(\bm{\theta}) + \mathds{1}_{L_0(k)}(\vthe_T) ) \right)\right] \leq \varepsilon$, that is, in order to ensure convergence to a stationary point, it suffices to set $T=O\left(1 / \varepsilon^2\right)$.
\end{theorem}
\begin{proof}
    Proof in Appendix.
\end{proof}
\begin{remark}
    As per Theorem \ref{thm:thm}, we can see that a large smoothing radius $\sigma$ will ease convergence, as it allows one to evaluate fewer random perturbations and rollouts. However, the counterpart is that the function optimized $F_{\sigma}$ may be further away from the true function $F$. 
\end{remark}

\begin{remark}[Overall complexity]
From Theorem~\ref{thm:thm}, to ensure convergence to a stationary point up to tolerance $\varepsilon$, we need to take $N = O(\frac{dC}{\sigma^2 \varepsilon^2})$, $n = O(\frac{d B^2}{\sigma^2 \varepsilon^2})$ , and $T = O(\frac{B}{\alpha \varepsilon^2}) \overset{(a)}{=} O(\frac{BL}{\varepsilon^2}) \overset{(b)}{=} O(\frac{B^2 d}{\varepsilon^2 \sigma^2})$, where (a) follows from the definition of $\alpha$ from Theorem~\ref{thm:thm}, and (b) follows from  Lemma~\ref{lem:smoothness}. Therefore, the overall number of episodes needed to ensure convergence is $T N n = O(\frac{d^3 B^4 C}{\sigma^6 \varepsilon^6})$. Note however that if one has access to a massively parallel device able to run in parallel $N n$ simulations, which is very common in RL settings (e.g. as in \cite{DBLP:journals/corr/SalimansHCS17}), the time complexity of the whole optimization process is simply $T =  O(\frac{B^2 d}{\varepsilon^2 \sigma^2})$.

\end{remark}

\section{Experiment}\label{sec:xp}
The design of NESHT is based on two central premises: (1) weights and biases corresponding to task-irrelevant features can be set to zero by the hard-thresholding operator, and (2) the hard-thresholding operator is compatible with the NES algorithm. 
In this section, we design experiments to address the following questions:
\begin{itemize}
    \item[1)] Does HT truly capture task-irrelevant observations? 
    \item[2)] Can the NESHT policy outperform other solutions? 
    \item[3)] Can the effect of HT be extended to visual tasks?
    \item[4)] How does the HT ratio affect performance?
\end{itemize}

\subsection{Experimental Setups}

We perform evaluations on two popular RL protocols, Mujoco~\cite{todorov2012mujoco} and Atari~\cite{bellemare2013arcade} environments. 

\paragraph{Mujoco setups} 
Observations in the Mujoco continuous environment are represented as floating-point values. Detecting redundancy features in these observations is challenging due to the complexity of the environment dynamics. To simulate decision-making in the presence of task-irrelevant features, we \textit{concatenate} Gaussian noise with the environment-provided observations. Additionally, we set 90\% of the immediate rewards to zero, replicating a more challenging real-world scenario characterized by an imperfect reward function. In the analysis section, we use the notation $k$ to represent the number of parameters to be retained. In the experiment section and in our implementation, we prefer $\beta$ to denote the hard-thresholding ratio, which refers to the ratio of activated neurons.

\paragraph{Protocol for Mujoco tasks} We primarily base our implementation on the framework outlined in \cite{DBLP:journals/corr/SalimansHCS17}. In this context, we use i.i.d. Gaussian perturbations in the parameter space to estimate the gradient. As a result, the natural gradient simplifies to the plain gradient, as shown in Equation~\ref{eq:nes_gradient_estimator_with_gaussian}. It is worth noting that we choose a linear policy for NESHT since it is easier to train and has been shown to be expressive enough for such tasks \cite{DBLP:conf/nips/ManiaGR18}.

\paragraph{Atari} Beside the Mujoco benchmarks, we include the more challenging Atari games with pixel inputs to answer the question of whether the hard-thresholding algorithm can handle irrelevant features in visual observations. We use the full screen of the Atari game as input (110x84 pixels). This includes not only the playing area, but also other task-irrelevant features, such as the scoreboard and backgrounds. 
Notably, we employ a CNN module for extracting latent features from the pixel inputs for Atari Games only, alone with a linear layer mapping them to the action space. A prevalent challenge associated with NES is their sample efficiency. In Atari experiments, we mirror the training configuration in \cite{DBLP:journals/corr/SalimansHCS17}. Specifically, we train the policy for a duration of 1 hour using a 500-core machine. Furthermore, we set an upper limit on the interaction budget at 10M steps.

\begin{table*}[]
    \centering
    \begin{tabular}{l l r r r r r r r}
    \toprule
        \textbf{Env Name} &  \textbf{Noise Ratio} & \textbf{TRPO} & \textbf{DDPG} & \textbf{TD3} & \textbf{Vanilla NES} & \textbf{ANF-SAC} & \textbf{NES-$L_1$} & \textbf{NES+HT(Ours)}\\
     \midrule
        \multirow{3}{4em}{\small{HalfCheetah}}
        & $\times 5$  & 198.3 & 1369.8 & 665.7 & 819.2 & 28.8 & 1152.7 &\textbf{1851.8}\\
        & $\times 10$ & 31.6 & 1285.2 & 197.7 & 805.9 & 8.5 & 918.4 &\textbf{1722.5}\\
        & $\times 20$ & 19.2 & 843.4 & 198.5 & 773.4 & 8.4 & 669.1 &\textbf{1213.9}\\
    \midrule
        \multirow{3}{4em}{Hopper} 
        & $\times 5$  & 266.9 & 917.6 & 972.5 & 803.7 & 1014.3 & 679.4 &\textbf{1354.5} \\
        & $\times 10$ & 43.7 & 824.3 & 918.0 & 241.2 & 1010.7 & 204.5 &\textbf{1187.1} \\
        & $\times 20$ & 23.0 & 809.5 & \textbf{991.2} & 147.7 & 1015.4 & 62.9 &\textbf{1006.6} \\
    \midrule
        \multirow{3}{4em}{Walker2d} 
        & $\times 5$  & 441.9 & 1030.1 & 1000.7 & 784.0 & 986.5 & 745.2 & \textbf{1043.8}\\
        & $\times 10$ & 420.3 & 907.3 & 559.7 & 384.7 & \textbf{966.6} & 364.2 &940.3 \\
        & $\times 20$ & 229.9 & 780.6 & 485.8 & 240.6 & \textbf{960.2} & 33.8 & 675.2  \\
    \bottomrule
    \end{tabular}
    \caption{Comparison on the performance of RL and ES algorithms on Mujoco locomotion tasks with varying levels of noise. The term \textbf{Noise Ratio} indicates the amount of redundancy observations (Gaussian noise). For example, $\times 5$ signifies that the dimension of Gaussian noises is five times that of the environment-provided observations. For RL algorithms, we train for 1 million environment steps while 10 million steps for ES approaches. We report the average scores received by last 10 evaluations across 20 random seeds. Notably, the results for the NESHT algorithm are from runs with a fixed hard-thresholding ratio, $\beta=0.9$.}
    \label{tab:exp_mujoco}
\end{table*}
\begin{figure*}[!ht]
    \centering
    \includegraphics[width=\linewidth]{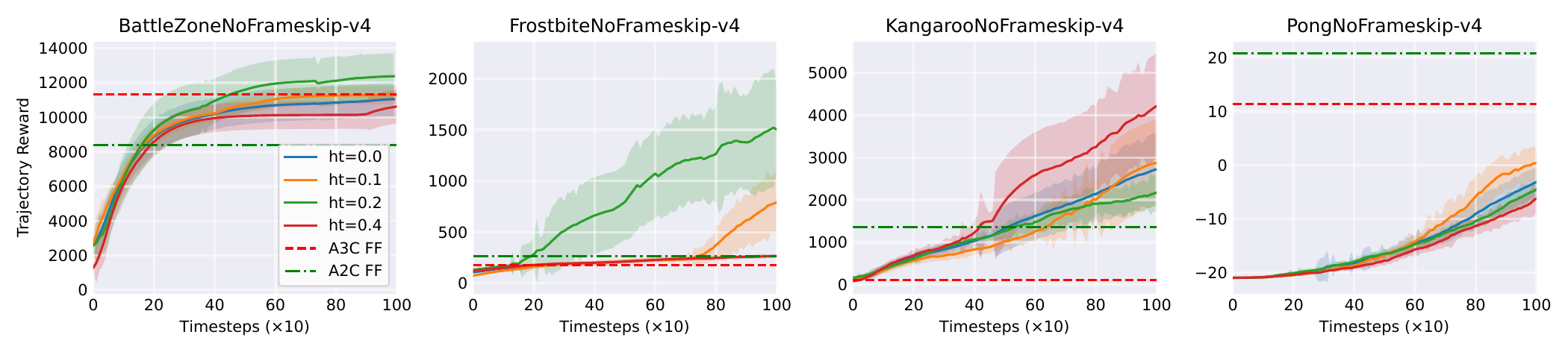}
    \hspace{-0.5cm}
    \caption{Comparison of NES in four representative Atari tasks with and without HT. We report the outcomes of NESHT under varying hard-thresholding ratios, based on 20 random seeds. Results from the A3C and A2C algorithms are adapted from~\protect\cite{DBLP:journals/corr/SalimansHCS17}.}
    \label{fig:atari_results}
\end{figure*}

\subsection{Efficacy of the hard-thresholding operator}
\label{sec:hypothsis}
We begin by evaluating the effectiveness of the hard-thresholding operator in identifying and truncating task-irrelevant features. To do this, we devise a toy example using the Mujoco Hopper-V3 environment with added noise. Specifically, the environment-provided 11-dimension features are augmented with 110-dimension i.i.d. Gaussian noise. We set the hard-thresholding ratio to 0.9, ensuring that only the top 10\% of the large learned weights are retained. 

The main focus of our analysis is to examine the norm of the learned weights across each \textit{segment} of observations. As shown in Figure~\ref{fig:weights_heatmap}, the first segment (index 0) relates to the features provided by the environment. This visualization showcases the weight norms ($L_1$ norm) for the genuine features (0-th segment) and the task-irrelevant features (remaining 10 segments). By employing the hard-thresholding operation iteratively, the learned weights for relevant features remain large, while those for irrelevant features are truncated.
By comparing results with and without the use of the hard-thresholding operator in Figure~\ref{fig:weights_heatmap}, we offer empirical evidence of its utility, i.e., sparse policy learned from the NESHT can select relevant features. Notably, it's essential to apply hard-thresholding iteratively during each update rather than just once post-learning.

\subsection{Performance on noised Mujoco tasks}
As illustrated in the preceding section, the hard-thresholding operator can successfully assist the NES algorithm in capturing task-relevant features. In this section, we evaluate the performance of the proposed NESHT algorithm in comparison to the original NES algorithm, as well as other commonly employed methods for decision-making tasks, such as RL algorithms.

\begin{figure*}[!ht]
    \centering
    \includegraphics[width=\linewidth]{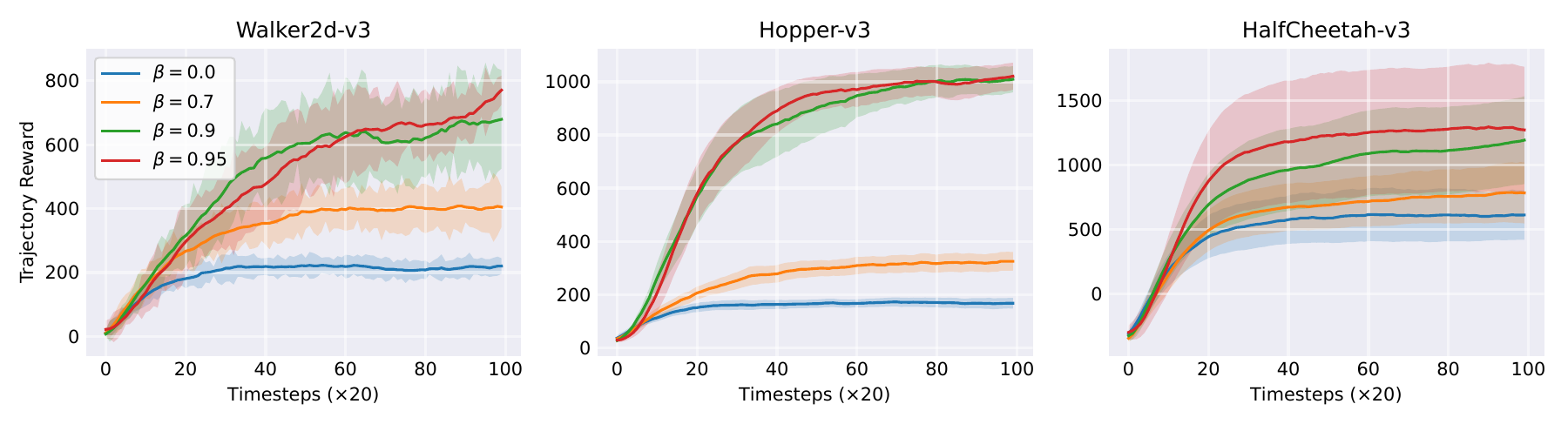}
    \hspace{-0.8cm}
    \caption{Ablation study. We assess the impact of hard-thresholding operation in the presence of Gaussian noise with a 20$\times$ noise dimension on the Mujoco environment. We vary the value of $\beta$ from 0.0 (corresponding to Vanilla NES) to 0.95 while retaining only 5\% of the neurons.}
    \label{fig:ablation}
\end{figure*}

\paragraph{Baselines}
Experiments are conducted in three popular Mujoco environments: Hopper, Walker2d, and HalfCheetah. We introduce Gaussian noise ranging from 5-fold to 20-fold, which is merged with the environment-provided features. 
Our baseline algorithms fall into three categories:
\begin{itemize}
    \item Vanilla NES policy: We follow the implementation in~\cite{DBLP:journals/corr/SalimansHCS17}, but with a modification: the agent is instantiated with a one-layer linear network.
    
    \item Classic RL algorithms: Since ES algorithms are often viewed as alternatives to RL, we include algorithms like TRPO~\cite{schulman2015trust}, DDPG~\cite{lillicrap2015continuous}, and TD3~\cite{fujimoto2018addressing} in our baselines.
    
    \item Other Solutions: One effective way to address irrelevant features is by incorporating an $L_1$-norm penalty. We explored this approach with the NES algorithm. A complexity penalty, the $L_1$-norm of the learned weights, is subtracted from the fitness score, and we refer to this as NES-$L_1$. Additionally, we include a solution in the literature of RL, ANF-SAC~\cite{grooten2023automatic}. This baseline algorithm is designed for RL with very dense rewards, while ours can handle the sparse reward signal.
\end{itemize}

It's worth noting that while NES, NESHT and NES-$L_1$ agents utilize one-layer linear networks, all other baseline algorithms for comparison are based on three-layer non-linear networks. We report the average scores across the last ten evaluations. Notably, we find ANF-SAC shows a failure after achieving impressive performance in HalfCheetah tasks.

\paragraph{Results} 
The results of our experiments are presented in Table~\ref{tab:exp_mujoco}. Our findings highlight the viability of NES as an alternative to RL approaches. Specifically, the performance of the NES agent consistently aligns with the state-of-the-art DDPG and TD3 methods. However, the NES policy shows dramatic performance degradation when confronted with increasing task-irrelevant observations. This trend highlights a harmful assumption within NES algorithms, which assumes that all features are task-relevant. Fortunately, the introduction of the hard-thresholding operator successfully mitigates this performance drop. Compared to vanilla NES, NESHT not only demonstrates enhanced resilience against irrelevant observations, but also consistently outperforms the RL baselines, emphasizing its effectiveness.

\subsection{Comparison on visual Atari benchmarks}
We expand our comparison to include Atari environments that involve visual inputs. This extension is motivated by concerns raised about the presence of artificial noises in previous evaluations. In Atari games, task-irrelevant features are from the environment and vary significantly. 
For instance, elements such as scoreboards in each game (with different locations) are considered important yet distracting observations that can hinder performance, as noted in~\cite{DBLP:journals/nature/MnihKSRVBGRFOPB15}.

We present the results on four representative Atari games in Figure~\ref{fig:atari_results}. Remarkably, we observed a significant improvement in the performance of NES algorithms when employing the hard-thresholding operator. It is important to note that the optimal hard-thresholding ratio, denoted as $\beta$, differs across these environments. We will discuss its influence in the next part. It is worth mentioning that this investigation does not include a comprehensive review of solutions in the RL or ES literature. Our main focus here is not to determine whether NES is a competitive solution (as already demonstrated by~\cite{DBLP:journals/corr/SalimansHCS17}), nor to claim that NESHT is the ultimate solution for Atari-type tasks. Instead, our goal is to explore the impact of the hard-thresholding operator on NES performance in decision-making tasks with pixel inputs.

\subsection{Hyper-parameter study} 
The NESHT algorithm, proposed in this work, introduces a critical hyperparameter known as the hard-thresholding ratio, denoted as $\beta$, which plays a fundamental role in controlling the activation of neurons. In this section, we assess its impact. We employ the noised Mujoco locomotion tasks, where we can control the amount of task-irrelevant features. Specifically, we use three environments with $\times 20$ Gaussian noises, where the noise dimension is 20 times that of the environment-provided observations.
Results depicted in Figure~\ref{fig:ablation} illustrate the effectiveness of hard-thresholding in mitigating the impact of noisy observations.The vanilla NES algorithm ($\beta=0.0$) struggles to learn from the noisy observations for policy improvement. However, as we increase $\beta$, truncating more small weights, we observe a significant performance improvement. This suggests that the hard-thresholding operation effectively filters out noise and enhances the algorithm's ability to learn from challenging, noisy environments.

\section{Conclusion}
Hard-thresholding emerges as a promising solution for $L_0$-constrained optimization, offering a solution to address task-irrelevant features frequently encountered in real-world scenarios. Yet, the compatibility between HT and NES gradients remains an area of active inquiry, rendering the practical implementation of such algorithms a subject of caution. In this study, we provide a theoretical foundation that establishes the convergence of the NES gradient descent (ascent) when paired with the HT operator, thus bolstering the credibility of our NESHT algorithm. Empirical assessments conducted across both Mujoco and Atari environments further substantiate the efficacy of our proposed method.
\clearpage
\bibliographystyle{named}
\bibliography{ijcai24}

\appendix
\onecolumn

\section{Appendix}
\subsection{NES Gradients}
\label{app:nes_gradients}
OpenAI NES~\cite{DBLP:journals/corr/SalimansHCS17} approximates the gradient with the following estimator:
\begin{equation}
    \nabla_{\boldsymbol{\theta}} \mathbb{E}_{\boldsymbol{\epsilon} \sim \mathcal{N}(0, I)} F(\boldsymbol{\theta} + \sigma \boldsymbol{\epsilon}) = \frac{1}{\sigma} \mathbb{E}_{\boldsymbol{\epsilon} \sim \mathcal{N}(0, I)} \{F(\boldsymbol{\theta} + \sigma \boldsymbol{\epsilon}) \boldsymbol{\epsilon}\}, 
\end{equation}
with $\boldsymbol{\theta}$ for the learned policy's parameters, $\boldsymbol{\epsilon}$ for the Gaussian noise on the parameter vector, and $\sigma$ to control the standard deviation. 
\begin{align*}
    \nabla_{\boldsymbol{\theta}} \mathbb{E}_{\boldsymbol{\epsilon} \sim \mathcal{N}(0, I)} F(\boldsymbol{\theta} + \sigma \boldsymbol{\epsilon}) 
    &= \nabla_{\boldsymbol{\theta}} \mathbb{E}_{\boldsymbol{x} \sim \mathcal{N}(\boldsymbol{\theta}, \sigma^2 I)} F(\boldsymbol{x})\\ 
    &= \nabla_{\boldsymbol{\theta}} \int_{\boldsymbol{x}} P(\boldsymbol{x} | \boldsymbol{\theta} , \sigma^2 I) F(\boldsymbol{x}) d \boldsymbol{x} \\
    &= \int_{\boldsymbol{x}} \nabla_{\boldsymbol{\theta}}  P(\boldsymbol{x} | \boldsymbol{\theta} , \sigma^2 I) F(\boldsymbol{x}) d \boldsymbol{x} \qquad (\text{Leibniz integral rule})\\
    &= \int_{\boldsymbol{x}} P(\boldsymbol{x}|\boldsymbol{\theta}, \sigma^2 I) \nabla_{\boldsymbol{\theta}} \log\Big[P(\boldsymbol{x} | \boldsymbol{\theta} , \sigma^2 I)\Big] F(\boldsymbol{x}) d \boldsymbol{x} \qquad \big(\text{log derivative trick}\big)\\
    &= \mathbb{E}_{\boldsymbol{x} \sim \mathcal{N}(\boldsymbol{\theta}, \sigma^2 I)}\big\{ F(\boldsymbol{x}) \nabla_\theta\big( - \dfrac{\|\boldsymbol{x} - \boldsymbol{\theta}\|_2^2}{2\sigma^2}\big) \big\} \qquad (\text{Gaussian P.D.F.})\\
    &= \frac{1}{\sigma} \mathbb{E}_{\boldsymbol{\epsilon} \sim \mathcal{N}(0, I)} \big\{ F(\boldsymbol{\theta} + \sigma\boldsymbol{\epsilon}) \cdot \boldsymbol{\epsilon} \big\}  \quad \big(\boldsymbol{x} \leftarrow \boldsymbol{\theta} + \sigma \boldsymbol{\epsilon} \big) 
\end{align*}

\subsection{Proof of Lemma~\ref{lem:boundvar}}\label{proof:vargrad}

Our proof of Lemma~\ref{lem:boundvar} is novel up to our knowledge, although it uses standard tools in optimization. We derive our Lemma~\ref{lem:var_vanilla} using a simple application of our assumptions, and derive Lemma~\ref{lem:boundvar1} with special care on the use of conditional expectations, since the rollouts $\bm{\tau}$ are sampled for a given $\veps$.

\subsubsection{Bias and Variance of the single perturbation, full expected policy gradient estimator}

Before deriving the error of the full gradient estimator (i.e. averaged over both the $n$ perturbations, and the $N$ rollouts), we first provide the bias and variance of the gradient estimator for a single random perturbation $\veps$, defined below as: 

$$ \hat{g}_{\sigma, \veps} (\vthe)= \frac{1}{\sigma}F(\vthe + \sigma \veps) \veps.$$

\paragraph{Bias:}

We first start by deriving the bias of such estimator.

\begin{lemma}\label{lem:unbiased_vanilla}
   $$ \E_{\tau, \veps} [\hat{g}_{\sigma, \veps} (\vthe)] = \nabla_{\vthe} F_{\sigma}(\vthe)$$
\end{lemma}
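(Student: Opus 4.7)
The overall plan is the standard evolution-strategies calculation: write $F_\sigma$ as a Gaussian convolution, differentiate under the integral, and recognize the result as the expectation of the proposed estimator.

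First I would rewrite $F_\sigma$ in a form where the parameter $\vthe$ appears inside the density rather than inside $F$. Starting from
\[
F_\sigma(\vthe) = \int F(\vthe + \sigma \veps)\,\frac{1}{(2\pi)^{d/2}}\exp\!\left(-\tfrac{1}{2}\|\veps\|^2\right) d\veps,
\]
a linear change of variables $\bm{u} = \vthe + \sigma\veps$ gives
\[
F_\sigma(\vthe) = \int F(\bm{u})\,\frac{1}{(2\pi\sigma^2)^{d/2}}\exp\!\left(-\tfrac{1}{2\sigma^2}\|\bm{u}-\vthe\|^2\right) d\bm{u}.
\]
Now $\vthe$ sits only in the Gaussian density, which is smooth. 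I would differentiate in $\vthe$ and push the gradient inside the integral. The interchange has to be justified: this is where Assumption \ref{ass:bound} plays its role, since $|F(\bm{u})|\le B$ means the integrand and its $\vthe$-gradient are bounded in absolute value by an integrable envelope (a constant times a shifted Gaussian times an affine function of $\bm{u}$), so dominated convergence applies uniformly on compact neighborhoods of $\vthe$. This is the only nontrivial analytic step; everything else is algebra.

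After differentiating, the gradient of the Gaussian density produces the factor $(\bm{u}-\vthe)/\sigma^2$, and reversing the change of variables $\bm{u} = \vthe + \sigma\veps$ yields
\[
\nabla_\vthe F_\sigma(\vthe) = \E_{\veps \sim N(\bm{0},\bm{I})}\!\left[\frac{1}{\sigma} F(\vthe + \sigma\veps)\,\veps\right].
\]

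Finally I would account for the expectation over $\tau$. Since $F$ is itself defined as an expectation over trajectories, $F(\vthe+\sigma\veps) = \E_\tau[\sum_t r(\bm{s_t},\bm{a_t})\mid \vthe+\sigma\veps]$, so by Fubini (which applies because of Assumption \ref{ass:bound}) I can exchange the order of expectation over $\tau$ and over $\veps$ to obtain $\E_{\tau,\veps}[\hat g_{\sigma,\veps}] = \E_\veps[\hat g_{\sigma,\veps}] = \nabla_\vthe F_\sigma(\vthe)$, completing the proof. The main obstacle, and the only place where an assumption is actually used, is the differentiation-under-the-integral step; the boundedness hypothesis is exactly what is needed to dominate both the integrand and its gradient.
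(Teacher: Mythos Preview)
Your proof is correct and follows essentially the same route as the paper: rewrite $F_\sigma$ as a Gaussian convolution, shift the variable so that $\vthe$ appears only in the density, differentiate under the integral, and undo the substitution. If anything you are more careful than the paper, which leaves the justification of the interchange as a TODO and does not explicitly address the vacuous $\tau$-expectation; your dominated-convergence argument via Assumption~\ref{ass:bound} is exactly the intended fix.
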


\begin{proof}

We proceed as in \cite{ermoliev1995nonsmooth}.

Let us denote the following $d$-dimensional isotropic Normal distribution $\phi(\veps) = \frac{1}{\left( 2 \pi \right)^{d/2}} e^{-\frac{\|\veps\|^2}{2}}$.

Note that we have:
\begin{equation}\label{eq:grad}
    \nabla \phi(\veps) = - \veps  \frac{1}{\left( 2 \pi \right)^{d/2}} e^{-\frac{\|\veps\|^2}{2}} =   - \veps \phi(\veps).
\end{equation}

Therefore:

\begin{align}
     \nabla F_{\sigma}(\vthe) &=  \nabla_{\vthe} \E F(\vthe + \sigma \veps)\nonumber\\
     &= \nabla_{\vthe} \int_{\R^d}  F(\vthe + \sigma \veps) \phi(\veps) d \veps\nonumber\\
     &\overset{(a)}{=} \frac{1}{\sigma^d}\nabla_{\vthe} \int_{\R^d}  F(\veps') \phi\left(\frac{\veps' - \vthe}{\sigma}\right) d \veps'\nonumber\\
         &=\frac{1}{\sigma^d}\nabla_{\vthe} \int_{\R^d}  F(\veps) \phi\left(\frac{\veps - \vthe}{\sigma}\right) d \veps\nonumber\\
         &\overset{(b)}{=} \frac{1}{\sigma^d} \int_{\R^d}  F(\veps) \nabla_{\vthe} \left[\phi\left(\frac{\veps - \vthe}{\sigma}\right)\right] d \veps\label{eq:refgrad}\\
         &= \frac{1}{\sigma^d} \int_{\R^d}  F(\veps) \left( - \frac{1}{\sigma} \right) \left(- \frac{\veps - \vthe}{\sigma}\right)\phi\left(\frac{\veps - \vthe}{\sigma}\right) d \veps\nonumber\\
         &\overset{(c)}{=} \int_{\R^d}  F(\sigma \veps' + \vthe) \left(  \frac{1}{\sigma} \right)\veps'\phi\left(\veps'\right) d \veps'\nonumber\\
         &=  \E_{\veps} \frac{1}{\sigma} F(\vthe + \sigma \veps) \veps = \E \hat{g}_{\sigma, \veps} (\vthe)\nonumber
\end{align}

Where in (a), we do the change of variable $\veps' = \vthe + \sigma \veps$
, in (b) we exchange integral and differentiation (as per Leibniz integral rule), which is possible in our case since $F$ is bounded per Assumption~\ref{ass:bound} (see \cite{ermoliev1995nonsmooth} (24) for instance), and in (c) we use the reverse change of variable as before ($\veps' = \frac{\veps - \vthe}{\sigma}$, so $\veps = \sigma \veps' + \vthe$
).

\end{proof}

\paragraph{Variance:} We now proceed with deriving the variance of such estimator: such result, expressed in terms of the bound $B$ on $F$, is, up to our knowledge, novel.

\begin{lemma}\label{lem:var_vanilla}
Under Assumption~\ref{ass:bound}, we have:
     $$ \E_{\tau, \veps}  \|\hat{g}_{\sigma, \veps} (\vthe)  - \nabla_{\vthe} F_{\sigma}(\vthe)\|^2 \leq \frac{d B^2}{\sigma^2} $$
\end{lemma}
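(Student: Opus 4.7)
The plan is to prove the inequality by combining a direct second-moment calculation for $\hat g_{\sigma,\veps}$ with the unbiasedness identity of Lemma~\ref{lem:unbiased_vanilla}, via the usual bias-variance decomposition of the mean-squared error.

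First, I would compute $\E\|\hat g_{\sigma,\veps}\|^2$ using only Assumption~\ref{ass:bound}. Starting from $\hat g_{\sigma,\veps} = \sigma^{-1} F(\vthe+\sigma\veps)\,\veps$ and bounding $F^2 \leq B^2$ pointwise,
\begin{equation*}
\E_\veps \|\hat g_{\sigma,\veps}\|^2 \;=\; \tfrac{1}{\sigma^2}\,\E_\veps\!\bigl[F(\vthe+\sigma\veps)^2\,\|\veps\|^2\bigr] \;\leq\; \tfrac{B^2}{\sigma^2}\,\E_\veps \|\veps\|^2 \;=\; \frac{dB^2}{\sigma^2},
\end{equation*}
using the standard identity $\E\|\veps\|^2 = d$ for $\veps\sim N(\mathbf{0},\bm{I}_d)$. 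This single moment calculation produces the claimed right-hand-side constant and is the substantive content of the lemma.

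Second, I would expand the target quantity via $\E\|X-v\|^2 = \E\|X\|^2 - 2\,v^\top \E X + \|v\|^2$ and substitute $\E[\hat g_{\sigma,\veps}] = \nabla_\vthe F_\sigma(\vthe)$ from Lemma~\ref{lem:unbiased_vanilla}. A short rearrangement gives
\begin{equation*}
\E\|\hat g_{\sigma,\veps} - \nabla_\vthe F(\vthe)\|^2 \;=\; \E\|\hat g_{\sigma,\veps}\|^2 \;-\; \|\nabla F_\sigma(\vthe)\|^2 \;+\; \|\nabla F_\sigma(\vthe) - \nabla F(\vthe)\|^2,
\end{equation*}
the usual variance-plus-squared-bias decomposition.

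The main obstacle, as the lemma is literally worded, is the residual $\|\nabla F_\sigma - \nabla F\|^2 - \|\nabla F_\sigma\|^2$: to absorb it into the step-one bound one needs $\|\nabla F_\sigma(\vthe) - \nabla F(\vthe)\|^2 \leq \|\nabla F_\sigma(\vthe)\|^2$, which is not a consequence of Assumption~\ref{ass:bound} alone (bounded $F$ imposes no control on the bias between $\nabla F$ and its Gaussian-smoothed counterpart, and one can exhibit bounded $F$ with $\sigma$ large enough that the stated inequality fails). I would therefore read the $\nabla_\vthe F(\vthe)$ in the statement as $\nabla_\vthe F_\sigma(\vthe)$, matching both the quantity that $\hat g_{\sigma,\veps}$ is shown to be unbiased for in Lemma~\ref{lem:unbiased_vanilla} and the label \texttt{lem:var\_vanilla}; under that reading the residual vanishes and the proof terminates at step one with $\E\|\hat g_{\sigma,\veps} - \nabla F_\sigma\|^2 = \E\|\hat g\|^2 - \|\nabla F_\sigma\|^2 \leq \E\|\hat g\|^2 \leq dB^2/\sigma^2$. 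If the unsmoothed $\nabla F$ is genuinely intended, I would need to introduce a smoothness assumption on $F$ to control $\|\nabla F - \nabla F_\sigma\|$ in terms of $\sigma$, at the cost of enlarging the right-hand-side constant.
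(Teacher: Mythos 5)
Your proof is correct and follows essentially the same route as the paper's: bound $\E\|\hat g_{\sigma,\veps}\|^2 \leq \frac{B^2}{\sigma^2}\E\|\veps\|^2 = \frac{dB^2}{\sigma^2}$ and then apply the bias--variance identity $\E\|\hat g - \E\hat g\|^2 = \E\|\hat g\|^2 - \|\E\hat g\|^2$. You are also right that the $\nabla_{\vthe} F(\vthe)$ in the statement must be read as $\nabla_{\vthe} F_{\sigma}(\vthe)$ --- the paper's own proof silently makes exactly that substitution, working with $\E\|\hat g_{\sigma,\veps} - \nabla F_{\sigma}(\vthe)\|^2$ throughout --- so your flagged ``obstacle'' is a typo in the lemma statement rather than a gap in either argument.
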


\begin{proof}

With $\veps \sim N(\bm{0}, \bm{I}_{d \times d})$, we have: 

\begin{equation}
    \E \|\veps\|^2 = \E \veps^{\top} \veps = \E \sum_{i=1}^d u_i^2 = \sum_{i=1}^d \E u_i^2 = d.
\end{equation}

Using the definition of $\hat{g}_{\sigma, \veps}$ and Assumption~\ref{ass:bound}, we have that: 

$$\|\hat{g}_{\sigma, \veps}\|^2 \leq \frac{B^2}{\sigma^2} \| \veps\|^2$$
Therefore: 
$$\E \|\hat{g}_{\sigma, \veps}\|^2 \leq \frac{B^2}{\sigma^2} \E \| \veps\|^2 = \frac{B^2}{\sigma^2} d. $$

We now use the bias-variance decomposition (in norm) (for a random variable $X$, $\E \|X - \E X \|^2 = \E \| X\|^2 - \|\E X \|^2$):

For any $\vthe \in \R^d$:
$$\E \|\hat{g}_{\sigma, \veps}  (\vthe)- \nabla F_{\sigma} (\vthe) \|^2 = \E \| \hat{g}_{\sigma, \veps} (\vthe)\|^2 - \|\nabla F_{\sigma} (\vthe) \|^2  \leq \E \| \hat{g}_{\sigma, \veps} (\vthe)\|^2 = \frac{d B^2}{\sigma^2}$$

\end{proof}

\subsubsection{Bias and Variance of the single perturbation, single rollout policy gradient estimator}

We now proceed with proving the bias and variance of the policy gradient estimator for a single perturbation $\veps$, and a single rollout $\tau$, defined below as:

$$ \hat{g}_{\sigma, \veps, \tau} (\vthe) := \frac{1}{\sigma}f_{\tau}(\vthe + \sigma \veps) \veps$$

where the rollout $\tau$ is sampled for a given $\veps$ (i.e. one first samples some $\veps$ to obtain a policy parameterized by $\vthe + \sigma \veps$, and then, one samples a rollout $\tau$ from that policy).

\begin{lemma}[Bias]\label{lem:unbiased}\label{lem:bias1} The gradient estimator is unbiased:
$$ \E_{\veps, \tau}    \hat{g}_{\sigma, \veps, \tau} (\vthe)  = \nabla_{\vthe} F_{\sigma}(\vthe)$$
\end{lemma}

\begin{proof}
    By the law of total probabilities, and using Assumption~\ref{ass:pol} we have: 
    \begin{align*}
    \E_{\veps, \tau}    \hat{g}_{\sigma, \veps, \tau} (\vthe) &= \E_{\veps} \E_{\tau | \veps}   \hat{g}_{\sigma, \veps, \tau} (\vthe) =  \E_{\veps} \E_{\tau | \veps} \frac{1}{\sigma}f_{\tau}(\vthe + \sigma \veps) \veps \\
    &= \E_{\veps}  \frac{1}{\sigma} \left( \E_{\tau | \veps} f_{\tau}(\vthe + \sigma \veps)  \right) \veps \\
    &= \E_{\veps}  \frac{1}{\sigma} F(\vthe + \sigma \veps) \veps = \nabla_{\vthe} F_{\sigma}(\vthe)
    \end{align*}
    Where the last equality follows from Lemma~\ref{lem:unbiased_vanilla}.
\end{proof}

\begin{lemma}[Variance]\label{lem:boundvar1}
Assume that Assumption~\ref{ass:bound} is verified, as well as Assumption~\ref{ass:pol}.
We have, for any $\vthe \in \{\vthe_{0}, \vthe_{\frac{1}{2}}, ..., \vthe_{T-\frac{1}{2}},  \vthe_{T}\}$: 

$$ \E_{\tau, \veps} \|\hat{g}_{\sigma, \veps, \tau} (\vthe) - \nabla F_{\sigma}(\vthe) \|^2 \leq  \frac{C d}{\sigma^2} +  \frac{d B^2}{\sigma^2}$$
    
\end{lemma}

\begin{proof}
For simplicity, let us fix $\vthe \in \{\vthe_{0}, \vthe_{\frac{1}{2}}, ..., \vthe_{T-\frac{1}{2}},  \vthe_{T}\}$ and denote $ \hat{g}_{\sigma, \veps, \tau} := \hat{g}_{\sigma, \veps, \tau}(\vthe)$.

    \begin{align*}
        &\E_{\tau, \veps} \|\hat{g}_{\sigma, \veps, \tau} - \nabla F_{\sigma}(\vthe) \|^2\\
        =& \E_{\tau, \veps} \| \hat{g}_{\sigma, \veps, \tau} - \frac{1}{\sigma} F(\vthe + \sigma \veps) \veps \|^2 + \E_{\tau, \veps} \|  \frac{1}{\sigma}F(\vthe + \sigma \veps) \veps  - \nabla F_{\sigma}(\vthe)\|^2 \\
&+ 2 \E_{\tau, \veps} \langle \hat{g}_{\sigma, \veps, \tau} - \frac{1}{\sigma} F(\vthe + \sigma \veps)  \veps,   \frac{1}{\sigma}F(\vthe + \sigma \veps) \veps - \nabla F_{\sigma}(\vthe) \rangle\\
    =& \E_{\tau, \veps} \| \frac{1}{\sigma}  \left (f_{\tau}(\vthe + \sigma \veps) \veps - F(\vthe + \sigma \veps) \veps \right ) \|^2 + \E_{\tau, \veps} \|  \frac{1}{\sigma}F(\vthe + \sigma \veps) \veps  - \nabla F_{\sigma}(\vthe)\|^2 \\
&+ 2 \E_{\tau, \veps} \langle \frac{1}{\sigma}f_{\tau}(\vthe + \sigma \veps) \veps - \frac{1}{\sigma} F(\vthe + \sigma \veps)  \veps,   \frac{1}{\sigma}F(\vthe + \sigma \veps) \veps - \nabla F_{\sigma}(\vthe) \rangle\\
    =& \frac{1}{\sigma^2} \E_{\tau, \veps} |  f_{\tau}(\vthe + \sigma \veps) -  F(\vthe + \sigma \veps)| \|  \veps \|^2 + \E_{\tau, \veps} \|  \frac{1}{\sigma}F(\vthe + \sigma \veps) \veps  - \nabla F_{\sigma}(\vthe)\|^2 \\
&+ 2 \E_{\veps}  \langle \E_{\tau | \veps} \frac{1}{\sigma}f_{\tau}(\vthe + \sigma \veps) \veps - \frac{1}{\sigma} F(\vthe + \sigma \veps)  \veps,   \frac{1}{\sigma}F(\vthe + \sigma \veps) \veps - \nabla F_{\sigma}(\vthe) \rangle\\
    \overset{(a)}{=}& \frac{1}{\sigma^2} \E_{\tau, \veps} |  f_{\tau}(\vthe + \sigma \veps) -  F(\vthe + \sigma \veps)| \|  \veps \|^2 + \E_{\tau, \veps} \|  \frac{1}{\sigma}F(\vthe + \sigma \veps) \veps  - \nabla F_{\sigma}(\vthe)\|^2 \\
        \overset{(b)}{\leq}& \frac{1}{\sigma^2} C \E_{\tau, \veps}  \|  \veps \|^2 + \E_{\tau, \veps} \|  \frac{1}{\sigma}F(\vthe + \sigma \veps) \veps  - \nabla F_{\sigma}(\vthe)\|^2 \\
    \overset{(c)}{\leq} &\frac{C d}{\sigma^2} +  \frac{d B^2}{\sigma^2}
    \end{align*}

    Where (a) follows from Lemma~\ref{lem:unbiased} (which implies $\E_{\tau | \veps} \frac{1}{\sigma}f_{\tau}(\vthe + \sigma \veps) \veps - \frac{1}{\sigma} F(\vthe + \sigma \veps)  \veps = 0$), (b) follows from Assumption~\ref{ass:pol}, and (c) follows from Lemma~\ref{lem:var_vanilla}.

\end{proof}


\subsubsection{Proof of Lemma~\ref{lem:boundvar}: Bias and Variance of the averaged gradient estimator}

We can now finally proceed with proving the bias and variance of the full gradient estimator, which is the averaging of the above single random perturbation and rollout gradient estimator, over several random perturbations $\veps$ and rollouts $\tau$. We recall Lemma~\ref{lem:boundvar} in its full form, including the necessary notations, in Lemma~\ref{lem:var_total} below:

\begin{lemma}[i.e. Lemma~\ref{lem:boundvar} from subsection~\ref{sec:errgrad}]\label{lem:var_total}
Assume that we sample $n$ random directions $\{\veps_i\}_{i=1}^n:= \{\veps_1, ..., \veps_n\}$ independently and identically distributed, and that for each of those random directions $\veps_i$, we sample we sample $N$ rollouts  $\{\tau_j^{\veps_i}\}_{j=1}^N:=\{\tau^{\veps_i}_1, .., \tau^{\veps_i}_N \}$ independently and identically distributed, to obtain a final collection of rollouts $\{\{\tau^{\veps_i}_j\}_{j=1}^N\}_{i=1}^n$ , and to get $N \times n$ gradient estimators $\hat{g}_{\sigma, \veps_i, {\tau}^{\veps_i}_j}, (i, j) \in [n] \times [N]$, and to obtain the following estimator, for any $\vthe \in \{\vthe_{0}, \vthe_{\frac{1}{2}}, ..., \vthe_{T-\frac{1}{2}},  \vthe_{T}\}$: 
$$ \bar{g}_{\sigma, \{\veps_i\}_{i=1}^n, \{\{\tau^{\veps_i}_j\}_{j=1}^N\}_{i=1}^n} (\vthe)  = \frac{1}{nN} \sum_{i=1}^n  \sum_{j=1}^N \hat{g}_{\sigma, \veps_i, \tau^{\veps_i}_j} (\vthe) $$

Then , we have:  

$$\bar{g}_{\sigma, \{\veps_i\}_{i=1}^n, \{\{\tau^{\veps_i}_j\}_{j=1}^N\}_{i=1}^n}(\vthe) = \nabla_{\vthe} F_{\sigma}(\vthe)$$

and:

$$ \E \|\bar{g}_{\sigma, \{\veps_i\}_{i=1}^n, \{\{\tau^{\veps_i}_j\}_{j=1}^N\}_{i=1}^n} (\vthe) - \nabla_{\vthe} F_{\sigma}(\vthe) \|^2 \leq  \frac{C d}{N\sigma^2} +  \frac{d B^2}{n\sigma^2} $$
    
\end{lemma}

\begin{proof}
    The unbiasedness follows from the linearity of expectation and the proof of ~Lemmas~\ref{lem:bias1}, and the variance follows from the fact that using $m$ i.i.d. samples of a random variable $X$ (for some integer $m$) divides the variance of the sample mean of $X$ by $m$, in the previous proof of \ref{lem:boundvar1}.
\end{proof}

\subsection{Proof of Lemma \ref{lem:smoothness}}\label{proof:smoothness}

Such proof is, up to our knowledge, novel, and uses the bound $B$ on $F$ to derive a bound on the Hessian $\nabla^2 F_{\sigma}(\vthe)$, using properties of the spectral norm for rank-1 matrices.

\begin{proof}
We have, with $\phi(\veps) = \frac{1}{\left( 2 \pi \right)^{d/2}} e^{-\frac{\|\veps\|^2}{2}}$ (cf. Equation \ref{eq:refgrad}): 

\begin{equation}
    \label{eq:befex}
    \nabla_{\vthe} F_{\sigma}(\vthe) = \frac{1}{\sigma^{d}} \int_{\R^d} F(\veps) \nabla_{\vthe} \left[ \phi \left( \frac{\veps - \vthe}{\sigma}\right) \right]d \veps
\end{equation}

And we also have, from Equation (\ref{eq:grad}), and with $\frac{\partial}{\partial \veps} $ denoting the partial derivative with respect to $\veps$, and denoting for simplicity $\phi''$ the Hessian of $\phi$:

$$  \phi''(\veps) = \frac{\partial}{\partial \veps} (    - \veps \phi(\veps)) $$ 

Therefore:

$$ \phi''(\veps) = -\bm{I} \phi(\veps) - \veps ( - \veps^{\top} \phi(\veps)) = ( \veps \veps^{\top} -\bm{I} ) \phi(\veps) $$

In Equation (\ref{eq:befex}) above, we can exchange differentiation and integral since the gradient of the Gaussian function, which we denote by $\phi'(\vthe)$, is continuously differentiable and tends to zero faster than any polynomial of $\vthe$, and  $F(\vthe)$ is bounded according to our assumptions (and therefore grows to infinity not faster than a bounded polynomial of $\vthe$, cf. \cite{ermoliev1995nonsmooth} p. (20)). Therefore, we obtain:
:

\begin{align*}
    \nabla_{\vthe}^2 F_{\sigma} (\vthe) &=  \frac{1}{\sigma^{d}} \int_{\R^d} F(\veps) \nabla_{\bm{\theta}}^2 \left[ \phi \left( \frac{\veps - \vthe}{\sigma}\right) \right] d \veps\\
    &=  \frac{1}{\sigma^{d+2}} \int_{\R^d} F(\veps) \phi'' \left( \frac{\veps - \vthe}{\sigma}\right) d \veps\\
    &=  \frac{1}{\sigma^{d+2}}  \int_{\R^d} F(\veps) \left(  \left( \frac{\veps - \vthe}{\sigma}\right) \left( \frac{\veps - \vthe}{\sigma}\right)^{\top} - \bm{I}\right)\phi \left( \frac{\veps - \vthe}{\sigma}\right) d \veps\\
    & = \frac{1}{\sigma^{2}}  \int_{\R^d} F(\vthe + \sigma \veps) \left( \veps  \veps^{\top} - \bm{I}\right) \phi \left( \veps\right) d \veps.
\end{align*}

Therefore, we have, with $\| \cdot\|_{\text{s}}$ denoting the spectral norm:

\begin{align*}
\| \nabla^2 F_{\sigma} (\vthe)  \|_{s} &= \| \frac{1}{\sigma^{2}}  \int_{\R^d} F(\vthe + \sigma \veps) \left(  \veps  \veps^{\top} - \bm{I}\right) \phi \left( \veps\right) d \veps \|_{s}\\
&\overset{(a)}{\leq}  \frac{1}{\sigma^2}   \int_{\R^d} \| F(\vthe + \sigma \veps) \left(   \veps  \veps^{\top} - \bm{I}\right)  \|_{s}  \phi \left( \veps\right) d \veps\\
&=  \frac{1}{\sigma^2}   \int_{\R^d} |  F(\vthe + \sigma \veps)| \| \left(   \veps  \veps^{\top} - \bm{I}\right)  \|_{s}  \phi \left( \veps\right) d \veps\\
 &\leq \frac{1}{\sigma^2}   \int_{\R^d}B \| \left(  \veps  \veps^{\top} - \bm{I}\right)   \|_{s} \phi \left( \veps\right) d \veps\\
 &= \frac{B}{\sigma^2}   \int_{\R^d} \|  \veps  \veps^{\top} - \bm{I} \|_{s} \phi \left( \veps\right)  d \veps\\
 &\overset{(b)}{\leq} \frac{B}{\sigma^2}   \int_{\R^d} \left(\|  \veps  \veps^{\top}\|_{s} + \| \bm{I}\|_{s} \right) \phi \left( \veps\right)  d \veps\\
  &\overset{(c)}{=} \frac{B}{\sigma^2}   \int_{\R^d} \left(  \|\veps \|_2^2 + 1 \right) \phi \left( \veps\right)  d \veps\\
  &= \frac{B}{\sigma^2} \left[ (\E \| \veps\|_2^2) + \left( \int_{\R^d} \phi(\veps) d \veps \right) \right]\\
    &= \frac{B}{\sigma^2} \left[ d +1 \right]\\
    &= \frac{(d + 1)B}{\sigma^2}  .
\end{align*}

Where (a) follows from Jensen inequality for expectation (and  since any norm, including the spectral norm, is convex)
, and where (b) follows from the triangular inequality.
And where (c) follows from the fact that the spectral norm of $\veps \veps^{\top}$ is $\|\veps \|_2^2$ since the Singular Value Decomposition of $\veps \veps^{\top}$ is $\veps \veps^{\top} = \frac{\veps}{\| \veps\|_2} \|\veps \|_2^2 \frac{\veps^{\top}}{\| \veps\|_2}$ (therefore, the largest singular value of $\veps \veps^{\top}$, which is the spectral norm by definition, is equal to $\| \veps\|_2^2$). We can now use Lemma 1.2.2 in \cite{nesterov2018lectures} to relate such bound on the Hessian to the smoothness constant of $F_{\sigma}$.


\end{proof}

\subsection{Proof of Theorem~\ref{thm:thm}: Final convergence rate}\label{proof:cvrate}
We can now use the above results into the general framework from \cite{xu2019non},  with some additional modifications to adapt their proof to our case of rewards \textit{maximization} (and not function minimization), and to our specific proximal term, which is the indicator function of the $L_0$ pseudo-ball of radius $k$ (for which the Euclidean projection onto it is the hard-thresholding operator), as well as a few modifications where we use our boundedness assumption on $F$ (Assumption~\ref{ass:bound}) instead of Assumption 1(ii) in \cite{xu2019non}.




\begin{proof}
Let $ F_{\sigma}^{-} (\vthe) :=-F_{\sigma} (\vthe)$, then we have 
\begin{align*}
    \max_{\|\vthe\|_0 \le k} F_{\sigma} (\vthe) = \min_{\|\vthe\|_0 \le k} F_{\sigma}^{-} (\vthe)
\end{align*}

Note that the nonconvex optimization problem $\min_{\|\vthe\|_0 \le k} F_{\sigma}^{-} (\vthe)$ can be reformulated as an alternative nonconvex optimization problem, wherein a nonsmooth, nonconvex indicator function serves as a regularization term:
\begin{align*}
 \min_{\|\vthe\|_0 \le k} F_{\sigma}^{-} (\vthe) = \min_{\vthe \in \R^d} F_{\sigma}^{-} (\vthe) + r(\vthe) \quad \text{where } \quad r(\vthe):= \begin{cases}
        0, &\text{if } \|\vthe\|_0 \le k \\
        +\infty, & \text{otherwise}
    \end{cases}
\end{align*}

Note that $r(\vthe)$ is a nonconvex lower-semicontinuous function (cf. the Introduction of \cite{xu2019non} for instance).

For simplification, denote by $\g_t$ the averaged gradient estimator of $\nabla_{\vthe} F_{\sigma}^{-} (\vthe_t)$ at time step $t$, i.e.,
\begin{align*}
    \g_t = -  \bar{g}_{\sigma, \{\veps_i\}_{i=1}^n, \{\{\tau^{\veps_i}_j\}_{j=1}^N\}_{i=1}^n}( \vthe_t).
\end{align*}

Then the update rule of $\vthe_{t+1}$ is equivalent to
\begin{align*}
    \vthe_{t+1} = & \text{trunc}\left( \vthe_t - \alpha \g_t  \right) \\
    \in & \argmin_{\vthe \in \R^d} \left\{ r(\vthe) + \frac{1}{2 \alpha} \| \vthe - (\vthe_t - \alpha \g_t) \|^2 \right\} \\
    = & \argmin_{\vthe \in \R^d} \left\{ r(\vthe)  + \< \g_t, \vthe - \vthe_t \> + \frac{1}{2\alpha } \|\vthe - \vthe_t\|^2 \right\}
\end{align*}
Then we have, with $\hat{\partial}$ denoting the Fr\'echet derivative (see \cite{xu2019non,deleu2021structured} for more details, in particular the proof of Theorem 2 in \cite{xu2019non}.
):
\begin{align}
    & - (\g_t + \frac{1}{\alpha} (\vthe_{t+1} - \vthe_t)) \in \hat{\partial} r(\vthe_{t+1}), \quad \nabla F_{\sigma}^{-} (\vthe_{t+1}) - (\g_t + \frac{1}{\alpha} (\vthe_{t+1} - \vthe_t)) \in \hat{\partial} (F_{\sigma}^{-} + r)(\vthe_{t+1}), \label{eq:frech} \\
    & r(\vthe_{t+1}) + \< \g_t, \vthe_{t+1} - \vthe_t \> + \frac{1}{2\alpha } \|\vthe_{t+1} - \vthe_t\|^2 \le r(\vthe_{t}) + \< \g_t, \vthe_{t} - \vthe_t \> + \frac{1}{2\alpha } \|\vthe_{t} - \vthe_t\|^2 = r(\vthe_t) \nonumber
\end{align}
According to Lemma~\ref{lem:smoothness}, the spectrum norm of Hessian matrix of $F_{\sigma}$ is bounded by $L := \frac{(d + 1)B}{\sigma^2}$, which also implies that $F_{\sigma}^{-}$ is $L$ smooth. Then we have,
\begin{align*}
    F_{\sigma}^{-}(\vthe_{t+1}) \le F_{\sigma}^{-}(\vthe_{t}) + \<\nabla F_{\sigma}^{-}(\vthe_t), \vthe_{t+1} - \vthe_t\> + \frac{L}{2} \|\vthe_{t+1} - \vthe_t\|^2.
\end{align*}
Then we have
\begin{align}\label{eq1}
    \<\g_t - \nabla F_{\sigma}^{-} (\vthe_t), \vthe_{t+1} - \vthe_t\> + \frac{1}{2} (\frac{1}{\alpha} - L) \|\vthe_{t+1} - \vthe_t\|^2 \le (F_{\sigma}^{-} + r) (\vthe_t) - (F_{\sigma}^{-} + r) (\vthe_{t+1}).
\end{align}
By Young's inequality, 
\begin{align*}
    \frac{1}{2} (\frac{1}{\alpha} - L) \|\vthe_{t+1} - \vthe_t\|^2 \le (F_{\sigma}^{-} + r) (\vthe_t) - (F_{\sigma}^{-} + r) (\vthe_{t+1}) + \frac{1}{2L} \|\g_t - \nabla F_{\sigma}^{-}(\vthe_t)\|^2 + \frac{L}{2} \|\vthe_{t+1} - \vthe_t\|^2.
\end{align*}
Summing up the above inequality over time steps $t=0, \dots, T-1$, we have 
\begin{align}\label{eq2}
    \left( \frac{1}{2\alpha} - L \right) \sum_{t=0}^{T-1} \|\vthe_{t+1} - \vthe_t\|^2 \le & (F_{\sigma}^{-} + r) (\vthe_0) - (F_{\sigma}^{-} + r) (\vthe_{T-1}) + \frac{1}{2L} \sum_{t=0}^{T-1} \|\g_t - \nabla F_{\sigma}^{-}(\vthe_t)\|^2 \notag \\
    \le & (F_{\sigma}^{-} + r) (\vthe_0) - (F_{\sigma}^{-} + r) (\vthe_{T-1}) + \frac{1}{2L} \sum_{t=0}^{T-1} \|\g_t - \nabla F_{\sigma}^{-}(\vthe_t)\|^2 \notag \\
    = & F_{\sigma}^{-} (\vthe_0) - F_{\sigma}^{-} (\vthe_{T-1}) + \frac{1}{2L} \sum_{t=0}^{T-1} \|\g_t - \nabla F_{\sigma}^{-}(\vthe_t)\|^2 \notag\\
    \le & 2B + \frac{1}{2L} \sum_{t=0}^{T-1} \|\g_t - \nabla F_{\sigma}^{-}(\vthe_t)\|^2,
\end{align}
where the equality is due to the definition of the indicator function $r(\vthe)$ and since $\vthe_0$ and $\vthe_{T-1}$ are $k$-sparse, and the last inequality is due to Assumption \ref{ass:bound}. According to Eq.\eqref{eq1}, we also have:
\begin{align*}
    & \frac{2}{\alpha} \<\g_t - \nabla F_{\sigma}^{-} (\vthe_{t+1}), \vthe_{t+1} - \vthe_t\> + \frac{1-\alpha L}{\alpha^2} \|\vthe_{t+1} - \vthe_t\|^2 \\
    \le & \frac{2 \left( (F_{\sigma}^{-} + r) (\vthe_t) - (F_{\sigma}^{-} + r) (\vthe_{t+1}) \right)}{\alpha} - \frac{2}{\alpha} \<\nabla F_{\sigma}^{-}(\vthe_{t+1}) - \nabla F_{\sigma}^{-}, \vthe_{t+1} - \vthe_t\>.
\end{align*}
Since $2 \<\g_t - \nabla F_{\sigma}^{-} (\vthe_{t+1}), \frac{1}{\alpha} (\vthe_{t+1} - \vthe_t)\> = \| \g_t - \nabla F_{\sigma}^{-} (\vthe_{t+1}) + \frac{1}{\alpha} (\vthe_{t+1} - \vthe_t) \|^2 - \|\g_t - \nabla_{\sigma}^{-} (\vthe_{t+1})\|^2 - \frac{1}{\alpha^2}\| \vthe_{t+1} - \vthe_t \|^2 $, we have :
\begin{align*}
    & \| \g_t - \nabla F_{\sigma}^{-} (\vthe_{t+1}) + \frac{1}{\alpha} (\vthe_{t+1} - \vthe_t) \|^2 \\
    \le & \|\g_t - \nabla_{\sigma}^{-} (\vthe_{t+1})\|^2 + \frac{1}{\alpha^2}\| \vthe_{t+1} - \vthe_t \|^2 - \frac{1-\alpha L}{\alpha^2} \|\vthe_{t+1} - \vthe_t\|^2 \\
    & + \frac{2 \left( (F_{\sigma}^{-} + r) (\vthe_t) - (F_{\sigma}^{-} + r) (\vthe_{t+1}) \right)}{\alpha} - \frac{2}{\alpha} \<\nabla F_{\sigma}^{-}(\vthe_{t+1}) - \nabla F_{\sigma}^{-}, \vthe_{t+1} - \vthe_t\> \\
    \le & 2\|\g_t - \nabla F_{\sigma}^{-} (\vthe_{t})\|^2 + 2 \|\nabla_{\sigma}^{-} (\vthe_{t}) - \nabla_{\sigma}^{-} (\vthe_{t+1})\|^2 + \frac{L}{\alpha}\| \vthe_{t+1} - \vthe_t \|^2 \\
    & + \frac{2 \left( (F_{\sigma}^{-} + r) (\vthe_t) - (F_{\sigma}^{-} + r) (\vthe_{t+1}) \right)}{\alpha} - \frac{2}{\alpha} \<\nabla F_{\sigma}^{-}(\vthe_{t+1}) - \nabla F_{\sigma}^{-}, \vthe_{t+1} - \vthe_t\> \\
    \le & 2\|\g_t - \nabla F_{\sigma}^{-} (\vthe_{t})\|^2 + \frac{2 \left( (F_{\sigma}^{-} + r) (\vthe_t) - (F_{\sigma}^{-} + r) (\vthe_{t+1}) \right)}{\alpha} + (2L^2 + \frac{3L}{\alpha}) \|\vthe_{t+1} - \vthe_{t}\|^2,
\end{align*}
where the second inequality is due to Young's inequality and the last inequality is due to the smoothness of the gradient. Summing up the above inequality over time steps $t=0,\dots, T-1$, we have :
\begin{align*}
    & \sum_{t=0}^{T-1} \| \g_t - \nabla F_{\sigma}^{-} (\vthe_{t+1}) + \frac{1}{\alpha} (\vthe_{t+1} - \vthe_t) \|^2 \\
    \le & 2 \sum_{t=0}^{T-1} \|\g_t - \nabla F_{\sigma}^{-} (\vthe_{t})\|^2 + \frac{2 \left( (F_{\sigma}^{-} + r) (\vthe_0) - (F_{\sigma}^{-} + r) (\vthe_{T-1}) \right)}{\alpha} + (2L^2 + \frac{3L}{\alpha}) \sum_{t=0}^{T-1} \|\vthe_{t+1} - \vthe_{t}\|^2 \\
    \le & 2 \sum_{t=0}^{T-1} \|\g_t - \nabla F_{\sigma}^{-} (\vthe_{t})\|^2 + \frac{4 B}{\alpha} + \frac{2}{\alpha^2} \sum_{t=0}^{T-1} \|\vthe_{t+1} - \vthe_{t}\|^2, 
\end{align*}
where the last inequality is due to Assumption \ref{ass:bound} and setting $\alpha = \frac{c}{L} < \frac{1}{2L}$. Combing the above inequality with Equation \eqref{eq2} and Equation \eqref{eq:frech}, we have :
\begin{align*}
    & \E[\text{dist} (0, \hat{\partial} (F_{\sigma}^{-} + r)(\vthe_{T}))^2] \\
    \le & \frac{1}{T} \sum_{t=0}^{T-1} \E \left[\| \g_t - \nabla F_{\sigma}^{-} (\vthe_{t+1}) + \frac{1}{\alpha} (\vthe_{t+1} - \vthe_t) \|^2 \right] \\
    \le & \frac{2}{T} \sum_{t=0}^{T-1} \E \|\g_t - \nabla F_{\sigma}^{-} (\vthe_{t})\|^2 + \frac{4 B}{T \alpha} + \frac{2}{\alpha^2 T} \sum_{t=0}^{T-1} \|\vthe_{t+1} - \vthe_{t}\|^2 \\
    \le & \frac{2}{T} \sum_{t=0}^{T-1} \E \|\g_t - \nabla F_{\sigma}^{-} (\vthe_{t})\|^2 + \frac{4 B}{T \alpha} + \frac{2}{\alpha^2 T} \left( \frac{4B}{\frac{1}{\alpha} - 2L } + \frac{1}{\frac{L}{\alpha} - 2L^2}\sum_{t=0}^{T-1} \E \|\g_t - \nabla F_{\sigma}^{-}(\vthe_t)\|^2 \right) \\
    = & \frac{2c(1-2c) + 2}{c(1-2c)} \frac{1}{T} \sum_{t=0}^{T-1} \E \|\g_t - \nabla F_{\sigma}^{-}(\vthe_t)\|^2 + \frac{12-8c}{1-2c} \frac{B}{\alpha T}
\end{align*}
     
We can now plug the result we obtained in Lemma~\ref{lem:var_total} in the result above, to obtain:

$$
\E\left[\operatorname{dist}\left(\bm{0}, \hat{\partial} \left( - F_{\sigma}\left(\vthe_T\right) + \mathds{1}_{L_0(k)}(\vthe_T)\right) \right)^2\right]  \leq \varepsilon^2,
$$

Using Jensen inequality and the fact that the square-root function is concave, we obtain Theorem \ref{thm:thm}.

\end{proof}
\end{document}